\newtheorem{theorem}{Theorem}[section]
\newtheorem{definition}[theorem]{Definition}
\newtheorem{example}[theorem]{Example}
\begin{document}

%

\title{Decision-making algorithm based on\\ the energy of interval-valued fuzzy soft sets}

\author[affil1]{Ljubica Djurovi\' c}
\ead{ljubica.milevic@pmf.kg.ac.rs}
\author[affil1]{Maja Lakovi\' c}
\ead{maja.lakovic@pmf.kg.ac.rs}
\author[affil1]{Nenad Stojanovi\' c}
\ead{nenad.s@kg.ac.rs}

\address[affil1]{Faculty of Science, University of Kragujevac, Radoja Domanovi\' ca 12, 34000 Kragujevac, Serbia}
\newcommand{\AuthorNames}{Lj. Djurovi\' c, M. Lakovi\' c, N. Stojanovi\' c }

\newcommand{\FilMSC}{Primary 03E72, 15A18 }
\newcommand{\FilKeywords}{Soft set, Interval-valued fuzzy sets, Interval-valued fuzzy soft sets, Energy, Singular values, Decision-making}
\newcommand{\FilCommunicated}{}
\newcommand{\FilSupport}{This work was supported by the Serbian Ministry of Science, Technological Development and Innovation (Agreement No. 451-03-65/2024-03/ 200122).}

\begin{abstract}
In our work, we continue to explore the properties of interval-valued fuzzy soft sets, which are obtained by combining interval-valued fuzzy sets and soft sets. We introduce the concept of energy of an interval-valued fuzzy soft set, as well as pessimistic and optimistic energy, enabling us to construct an effective decision-making algorithm. Through examples, the paper demonstrates how the introduced algorithm is successfully applied to problems involving uncertainty. Additionally, we compare the introduced method with other methods dealing with similar or related issues.
\end{abstract}

\maketitle

\makeatletter
\renewcommand\@makefnmark%
{\mbox{\textsuperscript{\normalfont\@thefnmark)}}}
\makeatother

\section{Introduction}
\label{}

The majority of problems in real-life situations, such as economics, engineering, environmental protection, social sciences, and medical sciences, often lack clear data. Therefore, due to various forms of uncertainty characterizing these problems, traditional methods are often not sufficiently successful. After Zadeh introduced fuzzy sets in 1965 (\cite{z1965}), many new theories dealing with imprecision and uncertainty have been developed. Some of these approaches involve expanding the theory of fuzzy sets, while others attempt to address imprecision and uncertainty in different ways. It is worth mentioning the theory of intuitionistic fuzzy sets introduced by Atanassov in 1986 (see \cite{atanasov}, \cite{atanasov2}), vague set theory proposed by Gau and Buehrer in 1993 (\cite{gau}), as well as rough set theory introduced by Pawlak in the early 1980s (see \cite{pawlak}, \cite{pawlak2}). Interval-valued fuzzy sets were independently introduced by Zadeh (\cite{z1975}), Grattan-Guiness (\cite{grattan}), Jahn (\cite{jahn}) and Sambuc (\cite{sambuc}) in the 1970s.  All these theories have their limitations when dealing with problems related to uncertainty and imprecision (see \cite{4}). We assume that the difficulties arise from the inadequacy of tools for parameterizing these theories. Molodtsov (\cite{4}) introduced soft sets in 1999 as a mathematical tool for solving problems of uncertainty and imprecision without the aforementioned difficulties. For this reason, the theory of soft sets began to be applied in various mathematical disciplines, such as operations research, game theory, measure theory, probability, Riemann integration, decision making, and machine learning, where it found significant use. The wide application of soft set theory in solving problems in real-life situations has accelerated its development, resulting in a large number of new research studies (see \cite{ali}, \cite{18}, \cite{19}, \cite{majumdar}, \cite{mukherjee}, \cite{pei}, \cite{sezgin}, \cite{yang}).

Maji et al. in \cite{lj1} also investigated fuzzy soft sets, where membership degrees are represented by real values between 0 and 1. These sets have found application in decision-making problems. However, fuzzy soft set models, as shown in the work \cite{lj1}, assume that membership degrees are crisp real values between 0 and 1. If we enable membership degrees in fuzzy soft sets to be represented by subintervals of $[0,1],$ it would provide greater flexibility and precision in modeling uncertainty. As a result, interval-valued fuzzy soft sets (IVFSS) were constructed in the work \cite{son} by combining interval-valued fuzzy sets and soft sets. Interval-valued fuzzy soft sets find the greatest application in decision making, as well as in data filling, information measure, etc. In the work \cite{yangint} a decision-making method based on evaluations for IVFSS was proposed, while in the work \cite{ma} an efficient decision-making approach involving additional objects was provided. Qin et al. proposed a new approach to decision-making using IVFSS based on the means of the contrast table in the work \cite{qin}.

Graph theory, as a mathematical discipline, represents a fundamental theory and a powerful tool in various fields, including computer science and chemistry. The concept of graph energy, as a numerical parameter, currently attracts the attention of many researchers. Ivan Gutman introduced the concept of graph energy in his work \cite{g1} in 1978 as the sum of the absolute values of the eigenvalues of a graph. The study of matrices and their properties, especially eigenvalues and singular values, forms the basis for analyzing graph energy. The introduction of graph energy has brought about many new scientific results (see \cite{d1}, \cite{g2}, \cite{g3}, \cite{j1}, \cite{n1}, \cite{n2}, \cite{z1}). Additionally, in the work \cite{mudric} energy and $\lambda$-energy of fuzzy soft sets were introduced, which have found significant application in decision making. The concept of energy appearing in these studies encourages the idea of introducing a similar concept, namely, the energy of interval-valued fuzzy soft sets as a numerical characteristic, along with possible application in the decision-making process.

In Section 2 of our paper, we present the fundamental concepts of soft set theory, fuzzy soft set theory, as well as interval-valued fuzzy soft set theory. In Section 3, we introduce new concepts, such as pessimistic and optimistic energy of interval-valued fuzzy soft sets, as well as their energy. These energies play a significant role in creating decision-making algorithms. In Section 4, we provide basic properties and boundaries for the introduced energies, while Section 5 is dedicated to the application of defined energies in decision-making. We will provide a decision-making algorithm based on these energies, and illustrate the proposed algorithm using two examples, comparing the obtained results with those obtained using other methods in the works \cite{ma}, \cite{qin} i \cite{yangint}. In Section 6, we summarize the obtained results from which certain conclusions can be drawn. Additionally, guidelines for further investigation of the introduced energies and their potential applications are provided.

\section{Preliminaries}
\label{}

In this section, we will provide basic definitions of soft sets, fuzzy sets, and fuzzy soft sets. Then we will introduce the basic concepts of interval-valued fuzzy sets, as well as interval-valued fuzzy soft sets. More detailed properties of these sets, along with numerous examples, can be found in \cite{ali}, \cite{cag}, \cite{11}, \cite{19}, \cite{lj1}, \cite{majumdar}, \cite{pei}, \cite{prade}, \cite{sezgin}, \cite{son}, \cite{yangint}, \cite{z1965} i \cite{z1975}.

Let $U$ be the universal set, $E$ be the set of parameters and $A\subseteq E.$ Using  $P(U),$ we denote the power set of the universe $U$.

\begin{definition}\cite{cag}
A soft set $F_A$ over the universe $U$ is a set determined by the mapping $f_A: E\rightarrow P(U),$ where $f_A(x)=\emptyset$ whenever $x\notin A.$
\end{definition}

The mapping $f_A$  is commonly referred to as the approximating function of the soft set $F_A$ for each $x\in E$. The soft skup $F_A$ over $U$ can also be represented using ordered pairs in the following way:
\[
F_A=\left\{\left(x,f_A(x)\right) \big|  x \in E, f_A(x) \in P(U)\right\}.
\]

We will denote the collection of all soft sets over the universe $U$ as $S(U)$.

\begin{definition}\cite{z1965}
A fuzzy set $X$ over the universe $U$ is a set determined by the mapping $\mu_X:U \rightarrow [0,1]$.
\end{definition}

The mapping $\mu_X$ is called the membership function of $X,$ while the value $\mu_X(u)$ is called the degree of membership of element $u \in U$ in the fuzzy set $X$.

A fuzzy set $X$ over $U$ can be represented in the following way:
\[
X=\left\{\left(\mu_X(u)/u\right) \big| u \in U, \mu_X(u) \in [0,1]\right\}.
\]

We will denote the collection of all fuzzy sets over the universe $U$ as $F(U)$.

By introducing the concept of fuzzy sets into the theory of soft sets, we obtain the concept of fuzzy soft sets.

\begin{definition}\cite{11}
A fuzzy soft set $\Gamma_A$ over the universe $U$ is a set determined by the mapping $\gamma_A : E \rightarrow F(U),$ where $\gamma_A(x)=\emptyset$ whenever $x \notin A.$
\end{definition}

Then $\gamma_A$ is called the fuzzy approximating function of the fuzzy soft set $\Gamma_A,$ while the value $\gamma_A(x)$ is the set called the $x$-element of the fuzzy soft set for all $x \in E$. Hence, the fuzzy soft set $\Gamma_A$ over the universe $U$ can be, similarly as soft sets, represented by a set of ordered pairs in the following way:
\[
\Gamma_A=\left\{\left(x,\gamma_A(x)\right)\big | x \in E, \gamma_A(x) \in F(U)\right\}.
\]

We will denote the collection of all fuzzy soft sets over the universe $U$ as $FS(U)$.

\begin{definition}\cite{z1975}
An interval-valued fuzzy set $\hat{X}$ over the universe $U$ is a set defined by mapping\linebreak $i_{\hat{X}}: U\rightarrow Int\left([0,1]\right),$ where $Int\left([0,1]\right)$  is the set of all closed subintervals of $[0,1].$
\end{definition}

The mapping $i_{\hat{X}}$ is called the interval-membership function of $\hat{X}$. If for every $u\in U$ we denote, respectively, with $i^-_{\hat{X}}(u)$ and $i^+_{\hat{X}}(u)$ the lower and upper degrees of membership $u$ to $\hat{X},$ where\linebreak $0\leqslant i^-_{\hat{X}}(u)\leqslant i^+_{\hat{X}}(u)\leqslant1,$ then we can denote  $i_{\hat{X}}(u)$ as
\[
i_{\hat{X}}(u)=\left[i^-_{\hat{X}}(u),i^+_{\hat{X}}(u)\right]
\]
and the value $i_{\hat{X}}(u)$ is called the degree of membership an element $u$ to $\hat{X}.$

We can also represent the interval-valued fuzzy set $\hat{X}$ in the following way:
\[
\hat{X}=\left\{\left(i_{\hat{X}}(u)/u\right)\big | u\in U, i_{\hat{X}}(u)\in Int\left([0,1]\right)\right\}.
\]

We will denote the collection of all interval-valued fuzzy sets over the universe $U$ as $IVFS(U)$.

\begin{definition}\cite{son}
  An interval-valued fuzzy soft set (IVFSS) $\mathcal{F}_A$ over the universe $U$  is a set defined by the mapping $\eta_A: E\rightarrow IVFS(U),$ where $\eta_A(x)=\emptyset$ whenever $x\notin A.$
\end{definition}

Similar to soft sets and fuzzy soft sets, the mapping $\eta_A$ is called the interval-valued fuzzy approximating function of the interval-valued fuzzy soft set $\mathcal{F}_A,$ and the value $\eta_A(x)$ is a set called an interval-valued fuzzy value set of the parameter $x\in E.$ Then $\mathcal{F}_A$ can be represented in the following way:
\[
\mathcal{F}_A=\left\{\left(x,\eta_A(x)\right)\big | x\in E, \eta_A(x)\in IVFS(U)\right\}.
\]

We will denote the collection of all interval-valued fuzzy soft sets over the universe $U$ as $IVFSS(U)$. In the following, we will use the notations $\mathcal{F}_A,\mathcal{F}_B,\mathcal{F}_C,\ldots$ for interval-valued fuzzy soft sets and $\eta_A,\eta_B,\eta_C,\ldots$ for their interval-valued fuzzy approximating functions, respectively.

We will provide an example of an interval-valued fuzzy soft set from the work \cite{yangint}, which we will use to illustrate the main concepts in our work.

\begin{example}\label{primer1}\cite{yangint}
  Suppose that $U=\{u_1,u_2,u_3,u_4,u_5,u_6\}$ is the set of the houses under consideration and\linebreak $A=\{x_1,x_2,x_3,x_4\}$ is the set of parameters, where for $i=1,2,3,4,$ the parameters $x_i$ represent, in order, \linebreak ''beautiful'', ''wooden'', ''cheap'' and ''in the green surroundings''. We define an interval valued fuzzy soft set $\mathcal{F}_A$ as follows:
  \begin{align*}
    \eta_A(x_1) &= \left\{([0.7,0.9]/u_1),([0.6,0.8]/u_2),([0.5,0.6]/u_3),([0.6,0.8]/u_4),([0.8,0.9]/u_5),([0.8,1.0]/u_6)\right\}, \\
    \eta_A(x_2) &= \left\{([0.6,0.7]/u_1),([0.8,1.0]/u_2),([0.2,0.4]/u_3),([0.0,0.1]/u_4),([0.1,0.3]/u_5),([0.7,0.8]/u_6)\right\}, \\
    \eta_A(x_3) &= \left\{([0.3,0.5]/u_1),([0.8,0.9]/u_2),([0.5,0.7]/u_3),([0.7,1.0]/u_4),([0.9,1.0]/u_5),([0.2,0.5]/u_6)\right\}, \\
    \eta_A(x_4) &= \left\{([0.5,0.8]/u_1),([0.9,1.0]/u_2),([0.7,0.9]/u_3),([0.6,0.8]/u_4),([0.2,0.5]/u_5),([0.7,1.0]/u_6)\right\}.
  \end{align*}
\end{example}

Two interval-valued fuzzy soft sets can be compared in the following way.

\begin{definition}\cite{son}
Let $A, B \subseteq E$ and let $\mathcal{F}_A$ and $\mathcal{F}_B$ be two interval-valued fuzzy soft sets over the universe $U$. We say that $\mathcal{F}_A$  is an interval-valued fuzzy soft subset of $\mathcal{F}_B$ if the following holds:

(1) $A\subseteq B$,

(2) $\forall x\in A$, $\eta_{A}(x)$  is an interval-valued fuzzy subset of $\eta_{B}(x),$
which we denote as $\mathcal{F}_A \widetilde{\subseteq}\mathcal{F}_B$.
\end{definition}

We say that $\mathcal{F}_A$ is an interval-valued fuzzy soft super set of $\mathcal{F}_B$ if $\mathcal{F}_B$ is an interval-valued fuzzy soft subset of $\mathcal{F}_A,$ which we denote as $\mathcal{F}_A\widetilde{\supseteq}\mathcal{F}_B.$

\begin{definition}\cite{son}
Two interval-valued fuzzy soft sets $\mathcal{F}_A$ and $\mathcal{F}_B$ are interval-valued fuzzy soft equal if $\mathcal{F}_A$ is an interval-valued fuzzy soft subset of $\mathcal{F}_B$ and $\mathcal{F}_B$ is an interval-valued fuzzy soft subset of $\mathcal{F}_A,$ which we denote as $\mathcal{F}_A=\mathcal{F}_B.$
\end{definition}

The operations complement, union, intersection, AND, and OR, as well as their properties, can be found in \cite{son} and \cite{yangint}.

\section{Energy of an interval-valued fuzzy soft set}
\label{}

Within this section, we introduce the concept of energy of an interval-valued fuzzy soft set, representing a numerical measure that encapsulates the interval-valued fuzzy soft set's attributes. These numerical values serve as parameters for scrutinizing conclusions in decision-making contexts, as further explored in Section 5.

Let's start with the tabular representation of the interval-valued fuzzy soft set.

Let $U=\{u_1,u_2,...,u_n\}$ i $A=\{x_1,x_2,...,x_m\}\subseteq E$. Then an interval-valued fuzzy soft set $\mathcal{F}_A$ can be represented using the following table:
\[
\begin{array}{r|cccc}
  \mathcal{F}_A & x_1 & x_2 & \cdots & x_m\\ \hline
   u_1 & i_{\eta_{A}(x_{1})}(u_1) & i_{\eta_{A}(x_{2})}(u_1) & \cdots & i_{\eta_{A}(x_{m})}(u_1) \\
   u_2 & i_{\eta_{A}(x_{1})}(u_2) & i_{\eta_{A}(x_{2})}(u_2) & \cdots & i_{\eta_{A}(x_{m})}(u_2) \\
       \vdots & \vdots &\vdots & \ddots & \vdots \\
   u_n & i_{\eta_{A}(x_{1})}(u_n) & i_{\eta_{A}(x_{m})}(u_n) &\cdots & i_{\eta_{A}(x_{m})}(u_n)\\
\end{array}.
\]

Using that $i_{\eta_A(x_j)}(u_i)=\left[i^-_{\eta_A(x_j)}(u_i),i^+_{\eta_A)x_j)}(u_i)\right],$ for $i=1,\ldots,n$ and $j=1,\ldots,m,$ the interval-valued fuzzy soft set $\mathcal{F}_A$ will be uniquely determined by the following two matrices.

\begin{definition}
Let $U=\{u_1,u_2,...,u_n\}$ and $A=\{x_1,x_2,...,x_m\}\subseteq E$. The matrix of minimum values of the interval-valued fuzzy soft set $\mathcal{F}_A$ is a matrix $\Psi_{\mathcal{F}_A}^{\min}$ of type $n\times m$ given by
  \[
 \Psi_{\mathcal{F}_A}^{\min}= \begin{bmatrix}
    i^-_{\eta_{A}(x_{1})}(u_1) & i^-_{\eta_{A}(x_{2})}(u_1) & \cdots & i^-_{\eta_{A}(x_{m})}(u_1) \\
    i^-_{\eta_{A}(x_{1})}(u_2) & i^-_{\eta_{A}(x_{2})}(u_2) & \cdots & i^-_{\eta_{A}(x_{m})}(u_2) \\
    \vdots &\vdots & \ddots & \vdots \\
    i^-_{\eta_{A}(x_{1})}(u_n) & i^-_{\eta_{A}(x_{m})}(u_n) &\cdots & i^-_{\eta_{A}(x_{m})}(u_n)
  \end{bmatrix}.
  \]
\end{definition}

Similarly, we define the matrix of maximum values of the interval-valued fuzzy soft set $\mathcal{F}_A.$
\begin{definition}
Let $U=\{u_1,u_2,...,u_n\}$ and $A=\{x_1,x_2,...,x_m\}\subseteq E$. The matrix of maximum values of the interval-valued fuzzy soft set $\mathcal{F}_A$ s a matrix $\Psi_{\mathcal{F}_A}^{\max}$ of type $n\times m$ given by
  \[
 \Psi_{\mathcal{F}_A}^{\max}= \begin{bmatrix}
    i^+_{\eta_{A}(x_{1})}(u_1) & i^+_{\eta_{A}(x_{2})}(u_1) & \cdots & i^+_{\eta_{A}(x_{m})}(u_1) \\
    i^+_{\eta_{A}(x_{1})}(u_2) & i^+_{\eta_{A}(x_{2})}(u_2) & \cdots & i^+_{\eta_{A}(x_{m})}(u_2) \\
    \vdots &\vdots & \ddots & \vdots \\
    i^+_{\eta_{A}(x_{1})}(u_n) & i^+_{\eta_{A}(x_{m})}(u_n) &\cdots & i^+_{\eta_{A}(x_{m})}(u_n)
  \end{bmatrix}.
  \]
\end{definition}

Let's consider Example \ref{primer1} listed in Section 2 of this paper.

\begin{example}\label{primer11}
The interval-valued fuzzy soft set $\mathcal{F}_A$ from Example \ref{primer1} can be represented using the matrix of minimum values and the matrix of maximum values as follows:
\[
\Psi_{\mathcal{F}_A}^{\min}= \begin{bmatrix}
  0.7 & 0.6 & 0.3 & 0.5 \\
  0.6 & 0.8 & 0.8 & 0.9 \\
  0.5 & 0.2 & 0.5 & 0.7 \\
  0.6 & 0.0 & 0.7 & 0.6 \\
  0.8 & 0.1 & 0.9 & 0.2 \\
  0.8 & 0.7 & 0.2 & 0.7
\end{bmatrix},\qquad \Psi_{\mathcal{F}_A}^{\max}=\begin{bmatrix}
  0.9 & 0.7 & 0.5 & 0.8 \\
  0.8 & 1.0 & 0.9 & 1.0 \\
  0.6 & 0.4 & 0.7 & 0.9 \\
  0.8 & 0.1 & 1.0 & 0.8 \\
  0.9 & 0.3 & 1.0 & 0.5 \\
  1.0 & 0.8 & 0.5 & 1.0
\end{bmatrix}.
\]
\end{example}

Now that we have defined the matrix of minimum values and the matrix of maximum values, we can also define the pessimistic and optimistic energies of the interval-valued fuzzy soft set.

\begin{definition}
  The pessimistic energy of the interval-valued fuzzy soft set $\mathcal{F}_A$, denoted as $\mathbb{E}^{\min}_{\mathcal{F}_A},$ is defined as
  \[
  \mathbb{E}^{\min}_{\mathcal{F}_A}=\sum\limits_{i=1}^{n}\sigma_i,
  \]
  where $\sigma_1\geqslant\sigma_2\geqslant\cdots\geqslant\sigma_n\geqslant0$ are the singular values of the matrix $\Psi^{\min}_{\mathcal{F}_A}$ of minimum values of the interval-valued fuzzy soft set $\mathcal{F}_A.$
\end{definition}

Let's return to Examples \ref{primer1} and \ref{primer11}.

\begin{example}
  The singular values of the matrix $\Psi^{\min}_{\mathcal{F}_A}$ from Example \ref{primer11} can easily be determined by finding the matrix $\Psi^{\min}_{\mathcal{F}_A}\cdot\left(\Psi^{\min}_{\mathcal{F}_A}\right)^T$ and computing its eigenvalues. Then, the singular values of the matrix $\Psi^{\min}_{\mathcal{F}_A}$ are the square roots of the eigenvalues of $\Psi^{\min}_{\mathcal{F}_A}\cdot\left(\Psi^{\min}_{\mathcal{F}_A}\right)^T:$
  \[
  \sigma_1=2.813267,\quad \sigma_2=0.877801,\quad \sigma_3=0.475986,\quad \sigma_4=0.358376,\quad \sigma_5=\sigma_6=0,
  \]
so based on the definition mentioned above,
  \[
  \mathbb{E}^{\min}_{\mathcal{F}_A}=\sum\limits_{i=1}^{6}\sigma_i=2.813267+0.877801+0.475986+0.358376+0+0=4.52543.
  \]
\end{example}

Similarly, we define the optimistic energy of the interval-valued fuzzy soft set $\mathcal{F}_A.$

\begin{definition}
   The optimistic energy of the interval-valued fuzzy soft set $\mathcal{F}_A$, denoted as $\mathbb{E}^{\max}_{\mathcal{F}_A},$ is defined as
  \[
  \mathbb{E}^{\max}_{\mathcal{F}_A}=\sum\limits_{i=1}^{n}\sigma_i,
  \]
  where $\sigma_1\geqslant\sigma_2\geqslant\cdots\geqslant\sigma_n\geqslant0$ are the singular values of the matrix $\Psi^{\max}_{\mathcal{F}_A}$ of maximum values of the interval-valued fuzzy soft set $\mathcal{F}_A.$
\end{definition}

Let's once again consider Examples \ref{primer1} and \ref{primer11}.

\begin{example}
   The singular values of the matrix $\Psi^{\max}_{\mathcal{F}_A}$ from Example \ref{primer11} are
   \[
   \sigma_1=3.721438,\quad \sigma_2=0.828772,\quad \sigma_3=0.364174,\quad \sigma_4=0.348435,\quad \sigma_5=\sigma_6=0,
   \]
so based on the previous definition,
   \[
   \mathbb{E}^{\max}_{\mathcal{F}_A}=\sum\limits_{i=1}^{6}\sigma_i=3.721438+0.828772+0.364174+0.348435+0+0=5.262819.
   \]
\end{example}

Now, to address the question of uniqueness of energy for two different interval-valued fuzzy soft sets, we will introduce the concept of energy of an interval-valued fuzzy soft set.

\begin{definition}
  The energy of the interval-valued fuzzy soft set  $\mathcal{F}_A,$ denoted as $\mathbb{E}^*_{\mathcal{F}_A}$, is defined as
  \[
  \mathbb{E}^*_{\mathcal{F}_A}=\frac{\mathbb{E}^{\min}_{\mathcal{F}_A}+\mathbb{E}^{\max}_{\mathcal{F}_A}}{2}.
  \]
\end{definition}

The energy of the interval-valued fuzzy soft set $\mathcal{F}_A$ from Example \ref{primer1} is $\mathbb{E}^*_{\mathcal{F}_A}=4.8941245.$

In Section 4, we present the basic properties of the defined energies, while in Section 5, we discuss the potential applications of the obtained energies in decision-making.

\section{Properties of energy of an interval-valued fuzzy soft set}
\label{}
In this section, we will provide upper bounds for the introduced energies of IVFSS.

\begin{theorem}
  Let $\mathcal{F}_A$ be an interval-valued fuzzy soft set, $U=\{u_1,u_2,\ldots,u_n\},$ $E=\{x_1,x_2,\ldots,x_m\}$ and $A\subseteq E.$ Let $\sigma_1,\sigma_2,\ldots,\sigma_n$ be the singular values of the matrix $\Psi_{\mathcal{F}_A}^{\min}$ representing the matrix of minimal values of the interval-valued fuzzy soft set $\mathcal{F}_A$. Then for the pessimistic energy of $\mathcal{F}_A$ holds:
$$\mathbb{E}^{\min}_{\mathcal{F}_A} \leqslant n\sqrt{m}.$$
\end{theorem}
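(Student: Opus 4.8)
The plan is to bound the nuclear-norm quantity $\mathbb{E}^{\min}_{\mathcal{F}_A}=\sum_{i=1}^{n}\sigma_i$ by the Frobenius norm of the underlying matrix via Cauchy--Schwarz, and then to exploit the fact that every entry of that matrix lies in $[0,1]$. Throughout, write $M=\Psi^{\min}_{\mathcal{F}_A}$, the $n\times m$ matrix whose $(i,j)$ entry is $i^-_{\eta_{A}(x_{j})}(u_i)\in[0,1]$. The proof then splits into two independent estimates that are combined at the end.

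First I would recall the standard identity relating the singular values of $M$ to its entries. By definition the numbers $\sigma_1^2,\ldots,\sigma_n^2$ are the eigenvalues of the symmetric positive semidefinite matrix $M M^{T}$ (padded with zeros when $m<n$, so that exactly $n$ of them are listed), and hence their sum is the trace:
\[
\sum_{i=1}^{n}\sigma_i^2=\operatorname{tr}\left(M M^{T}\right)=\sum_{i=1}^{n}\sum_{j=1}^{m}\left(i^-_{\eta_{A}(x_{j})}(u_i)\right)^2 .
\]
This is precisely the squared Frobenius norm of $M$. Since each entry satisfies $0\leqslant i^-_{\eta_{A}(x_{j})}(u_i)\leqslant 1$, each squared entry is at most $1$, and there are $nm$ of them, so this sum is bounded by $nm$:
\[
\sum_{i=1}^{n}\sigma_i^2\leqslant nm .
\]

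Finally I would apply the Cauchy--Schwarz inequality to the vector $(\sigma_1,\ldots,\sigma_n)$ against the all-ones vector $(1,\ldots,1)$ of length $n$, which gives $\sum_{i=1}^{n}\sigma_i\leqslant\sqrt{n}\,\bigl(\sum_{i=1}^{n}\sigma_i^2\bigr)^{1/2}$. Inserting the bound from the previous step yields
\[
\mathbb{E}^{\min}_{\mathcal{F}_A}=\sum_{i=1}^{n}\sigma_i\leqslant\sqrt{n}\cdot\sqrt{nm}=n\sqrt{m},
\]
which is the claimed inequality. The argument is essentially mechanical, so there is no serious obstacle; the only point that deserves care is the bookkeeping of the singular values, namely the convention that the list $\sigma_1,\ldots,\sigma_n$ may contain $n-\min(n,m)$ trailing zeros, which is what makes the length-$n$ Cauchy--Schwarz step legitimate and produces the factor $n\sqrt{m}$. (One could in fact run Cauchy--Schwarz over only the $\min(n,m)$ possibly nonzero singular values to get a slightly sharper constant, but the stated bound $n\sqrt{m}$ follows directly from the clean length-$n$ version.)
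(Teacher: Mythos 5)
Your proposal is correct and follows essentially the same argument as the paper: the paper bounds $\sum_{i=1}^{n}\sigma_i$ by $\sqrt{n\sum_{i=1}^{n}\sigma_i^2}$ via the arithmetic--quadratic mean inequality (which is exactly your Cauchy--Schwarz step against the all-ones vector) and then uses the same trace identity $\sum_{i=1}^{n}\sigma_i^2=\operatorname{tr}\bigl(\Psi^{\min}_{\mathcal{F}_A}(\Psi^{\min}_{\mathcal{F}_A})^T\bigr)\leqslant nm$ coming from the entries lying in $[0,1]$. Your explicit remark about padding the singular value list with zeros when $m<n$ is a careful touch the paper glosses over, but it does not change the substance of the argument.
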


\begin{proof}
  Using the inequality between the arithmetic mean and the quadratic mean applied to the singular values $\sigma_1,\sigma_2,\ldots,\sigma_n,$ we obtain
$$\mathbb{E}^{\min}_{\mathcal{F}_A} = \sum_{i=1}^{n}\sigma_{i}\leqslant\sqrt{n\sum_{i=1}^{n}\sigma_{i}^2}.$$

Knowing the basic properties of matrices, eigenvalues, and singular values, as well as the fact that $i^-_{\eta_A(x_j)}(u_i)\leqslant1,$ for $i=1,\ldots,n$ and $j=1,\ldots,m,$ we get
$$\sum_{i=1}^{n}\sigma_{i}^2={\mathrm{tr}}\left(\Psi_{\mathcal{F}_A}^{\min}\cdot\left(\Psi_{\mathcal{F}_A}^{\min}\right)^T\right)=\sum\limits_{i=1}^{n}\sum\limits_{j=1}^{m}\left(i^-_{\eta_A(x_j)}(u_i)\right)^2\leqslant mn.$$

Therefore, $\mathbb{E}^{\min}_{\mathcal{F}_A} \leqslant n\sqrt{m}.$
\end{proof}

The same applies to the optimistic energy of $\mathcal{F}_A.$

\begin{theorem}
  Let $\mathcal{F}_A$ be an interval-valued fuzzy soft set, $U=\{u_1,u_2,\ldots,u_n\},$ $E=\{x_1,x_2,\ldots,x_m\}$ and $A\subseteq E.$ Let $\sigma_1,\sigma_2,\ldots,\sigma_n$ be the singular values of the matrix $\Psi_{\mathcal{F}_A}^{\max}$ representing the matrix of maximum values of the interval-valued fuzzy soft set $\mathcal{F}_A$. Then for the optimistic energy of $\mathcal{F}_A$ holds:
$$\mathbb{E}^{\max}_{\mathcal{F}_A} \leqslant n\sqrt{m}.$$
\end{theorem}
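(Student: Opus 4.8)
The plan is to mirror exactly the argument used for the pessimistic energy in the preceding theorem, since the only structural difference is that we now work with the matrix $\Psi_{\mathcal{F}_A}^{\max}$ of maximum values rather than $\Psi_{\mathcal{F}_A}^{\min}$, and the defining bound $i^+_{\eta_A(x_j)}(u_i)\leqslant 1$ holds just as $i^-_{\eta_A(x_j)}(u_i)\leqslant 1$ did. First I would apply the inequality between the arithmetic and quadratic means to the singular values $\sigma_1,\sigma_2,\ldots,\sigma_n$ of $\Psi_{\mathcal{F}_A}^{\max}$, obtaining
$$\mathbb{E}^{\max}_{\mathcal{F}_A} = \sum_{i=1}^{n}\sigma_i \leqslant \sqrt{n\sum_{i=1}^{n}\sigma_i^2}.$$

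Next I would rewrite the sum of squares of singular values as a trace. Using the standard fact that the squared singular values of a matrix are the eigenvalues of $\Psi_{\mathcal{F}_A}^{\max}\cdot\left(\Psi_{\mathcal{F}_A}^{\max}\right)^T$, and that the trace equals the sum of eigenvalues, I get
$$\sum_{i=1}^{n}\sigma_i^2 = {\mathrm{tr}}\left(\Psi_{\mathcal{F}_A}^{\max}\cdot\left(\Psi_{\mathcal{F}_A}^{\max}\right)^T\right) = \sum_{i=1}^{n}\sum_{j=1}^{m}\left(i^+_{\eta_A(x_j)}(u_i)\right)^2.$$

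Then I would invoke the entrywise bound $i^+_{\eta_A(x_j)}(u_i)\leqslant 1$ coming from the definition of an interval-valued fuzzy set, so each of the $mn$ summands is at most $1$, giving $\sum_{i=1}^{n}\sigma_i^2 \leqslant mn$. Combining this with the mean inequality yields $\mathbb{E}^{\max}_{\mathcal{F}_A} \leqslant \sqrt{n\cdot mn} = n\sqrt{m}$, which is the desired conclusion.

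I do not expect any genuine obstacle here, as the proof is essentially a verbatim repetition of the pessimistic case with $i^-$ replaced by $i^+$; the only point meriting a moment's care is confirming that the upper endpoints of the membership intervals satisfy the same $[0,1]$ bound as the lower endpoints, which is immediate from the definition of the interval-membership function.
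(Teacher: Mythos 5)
Your proof is correct and is exactly the argument the paper intends: the paper omits the proof of this theorem, stating only that ``the same applies'' as in the pessimistic case, and your write-up is precisely that analogous argument (AM--QM on the singular values, trace of $\Psi_{\mathcal{F}_A}^{\max}\cdot(\Psi_{\mathcal{F}_A}^{\max})^T$, and the entrywise bound $i^+_{\eta_A(x_j)}(u_i)\leqslant 1$).
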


Using the previous two theorems, we obtain the upper bound of the energy of an interval-valued fuzzy soft set.

\begin{theorem}
  Let $\mathcal{F}_A$ be an interval-valued fuzzy soft set, $U=\{u_1,u_2,\ldots,u_n\},$ $E=\{x_1,x_2,\ldots,x_m\}$ and $A\subseteq E.$ Then for the energy of $\mathcal{F}_A$ holds:
$$\mathbb{E}^{*}_{\mathcal{F}_A} \leqslant n\sqrt{m}.$$
\end{theorem}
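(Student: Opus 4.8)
The plan is to combine the two preceding theorems with the definition of the energy $\mathbb{E}^{*}_{\mathcal{F}_A}$. Recall that by definition we have
\[
\mathbb{E}^{*}_{\mathcal{F}_A}=\frac{\mathbb{E}^{\min}_{\mathcal{F}_A}+\mathbb{E}^{\max}_{\mathcal{F}_A}}{2}.
\]
So the entire statement reduces to averaging two inequalities that have already been established.

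First I would invoke the first theorem of this section, which gives $\mathbb{E}^{\min}_{\mathcal{F}_A}\leqslant n\sqrt{m}$, and then the second theorem, which gives $\mathbb{E}^{\max}_{\mathcal{F}_A}\leqslant n\sqrt{m}$. Adding these two bounds yields $\mathbb{E}^{\min}_{\mathcal{F}_A}+\mathbb{E}^{\max}_{\mathcal{F}_A}\leqslant 2n\sqrt{m}$.

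Finally I would divide by $2$ to conclude
\[
\mathbb{E}^{*}_{\mathcal{F}_A}=\frac{\mathbb{E}^{\min}_{\mathcal{F}_A}+\mathbb{E}^{\max}_{\mathcal{F}_A}}{2}\leqslant\frac{2n\sqrt{m}}{2}=n\sqrt{m},
\]
which is exactly the desired bound. There is essentially no obstacle here: the real content lies entirely in the first theorem (the arithmetic-mean/quadratic-mean step together with the trace computation and the bound $i^{-}_{\eta_A(x_j)}(u_i)\leqslant 1$), and that work has already been done. The only thing worth noting is that since both $\mathbb{E}^{\min}_{\mathcal{F}_A}$ and $\mathbb{E}^{\max}_{\mathcal{F}_A}$ satisfy the \emph{same} upper bound $n\sqrt{m}$, their average cannot exceed it, so the statement is immediate from monotonicity of the arithmetic mean.
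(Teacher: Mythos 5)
Your proposal is correct and follows exactly the paper's own argument: both invoke the two preceding theorems bounding $\mathbb{E}^{\min}_{\mathcal{F}_A}$ and $\mathbb{E}^{\max}_{\mathcal{F}_A}$ by $n\sqrt{m}$, and then average those bounds via the definition $\mathbb{E}^{*}_{\mathcal{F}_A}=\frac{1}{2}\left(\mathbb{E}^{\min}_{\mathcal{F}_A}+\mathbb{E}^{\max}_{\mathcal{F}_A}\right)$. There is no gap; this is precisely the paper's proof.
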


\begin{proof}
  Since matrices $\Psi_{\mathcal{F}_A}^{\min}$ and $\Psi_{\mathcal{F}_A}^{\max}$ are of the same type $n\times m,$ they both have the same number of singular values, so using the previous theorems and a definition of an energy of $\mathcal{F}_A$, we get
  
  $\mathbb{E}^{*}_{\mathcal{F}_A} =\frac{\mathbb{E}^{\min}_{\mathcal{F}_A} +\mathbb{E}^{\max}_{\mathcal{F}_A} }{2}\leqslant\frac{n\sqrt{m}+n\sqrt{m}}{2}=n\sqrt{m}.$
\end{proof}

\section{Application}
\label{}

In this section, we delve into the potential applications of the energy of interval-valued fuzzy soft sets, as introduced in Section 3 of this paper. We aim to illustrate how this energy serve as a potent tool in devising decision-making algorithms. Understanding the interconnectedness among all factors within the system is crucial for effective decision-making. Factors with certain attributes may dominate the system, potentially leaving other segments uncovered. In scenarios where no single factor adequately covers a specific segment of the system, issues may arise. To illustrate this point, we reference specific examples outlined in the paper \cite{qin}, where the obtained results are compared with results based on algorithms proposed in \cite{ma} and \cite{yangint}. Within our study, we base decision-making in these examples on the defined energy of interval-valued fuzzy soft sets.

Let's first consider the example where a family needs to choose an apartment to buy from the work \cite{qin} by Qin et al. The authors proposed an algorithm to solve this problem and compared the results obtained with the results obtained by applying algorithms from the works \cite{yangint} and \cite{ma}.

\begin{example}\label{apartman}\cite{qin}
One family is planning to buy an apartment building for living. There are five alternative apartment candidates from five different property developers. This family hesitates about which to buy. We are able to evaluate the alternatives from four aspects:''reasonable price'',''excellent geographical location'', ''perfect facilities'', ''cozy environment''. The model of IVFSS is chosen to describe the customer’s feeling for the five candidates from four aspects. Hence, suppose that the universe $U$ represents the set of the five different alternative apartment candidates and $U = \{u_1, u_2, u_3, u_4, u_5\}.$ Then, $A$ represents the set of four parameters and 

$A = \{x_1, x_2, x_3, x_4\} = \{\text{reasonable price, excellent geographical location, perfect facilities, cozy environment}\}.$ 

Then IVFSS $\mathcal{F}_A$ on $U$ is described by the following table

$\begin{array}{r|cccc}
  \mathcal{F}_A & x_1 & x_2 & x_3 & x_4 \\ \hline
  u_1 & [0.3,0.5] & [0.6,0.7] & [0.2,0.4] & [0.4,0.5] \\
  u_2 & [0.3,0.4] & [0.4,0.5] & [0.6,0.7] & [0.1,0.3] \\
  u_3 & [0.5,0.6] & [1.0,1.0] & [0.2,0.3] & [0.2,0.4] \\
  u_4 & [0.5,0.7] & [0.0,0.1] & [0.7,0.8] & [0.6,0.7] \\
  u_5 & [0.3,0.6] & [0.3,0.4] & [0.4,0.7] & [0.2,0.3]
\end{array}.$
\end{example}

To solve the mentioned problem, we will formulate an algorithm for decision-making based on the energy of the interval-valued fuzzy soft set.

{\bf Step 1:} Input the interval-valued fuzzy soft set $\mathcal{F}_A$ over $U.$

{\bf Step 2:} Form interval-valued fuzzy soft sets $\mathcal{F}_{A_{i}}$, over $U\setminus{u_{i}}$ for each $u_{i}\in U$.

{\bf Step 3:} For each interval-valued fuzzy soft set from step 2, form their corresponding matrice of minimum and matrice of maximum values.

{\bf Step 4:} Determine the singular values for each obtained matrice of minimum and each obtained matrice of maximum values.

{\bf Step 5:} Determine the pessimistic energies $\mathbb{E}^{\min}_{\mathcal{F}_{A_i}}$ and the optimistic energies $\mathbb{E}^{\max}_{\mathcal{F}_{A_i}}$ for each interval-valued fuzzy soft sets $\mathcal{F}_{A_{i}}$ based on the obtained singular values.

{\bf Step 6:} Determine the energies $\mathbb{E}^*_{\mathcal{F}_{A_i}}$ for each interval-valued fuzzy soft sets $\mathcal{F}_{A_{i}}$ based on the obtained energies in step 5.

{\bf Step 7:} Determine the minimum energy among all energies of interval-valued fuzzy soft sets obtained in step 6 and interpret the result obtained.

Using Example \ref{apartman} we will illustrate how decisions are made based on the algorithm. The interval-valued fuzzy soft set $\mathcal{F}_A$ is represented by the table in Example \ref{apartman}, fulfilling the first step of the algorithm. In the second step, we need to form five interval-valued fuzzy soft sets, as described. We will illustrate how to find $\mathcal{F}_{A_1},$ and the remaining interval-valued fuzzy soft sets are determined analogously. We obtain the table for $\mathcal{F}_{A_1}$ by removing the first row of the table for $\mathcal{F}_A,$ resulting in

$\begin{array}{r|cccc}
  \mathcal{F}_{A_1} & x_1 & x_2 & x_3 & x_4 \\ \hline
  u_2 & [0.3,0.4] & [0.4,0.5] & [0.6,0.7] & [0.1,0.3] \\
  u_3 & [0.5,0.6] & [1.0,1.0] & [0.2,0.3] & [0.2,0.4] \\
  u_4 & [0.5,0.7] & [0.0,0.1] & [0.7,0.8] & [0.6,0.7] \\
  u_5 & [0.3,0.6] & [0.3,0.4] & [0.4,0.7] & [0.2,0.3]
\end{array}.$

The corresponding matrix of minimum values and the matrix of maximum values for $\mathcal{F}_{A_1}$ are
\[
\Psi_{\mathcal{F}_{A_1}}^{\min}=\begin{bmatrix}
                                  0.3 & 0.4 & 0.6 & 0.1 \\
                                  0.5 & 1.0 & 0.2 & 0.2 \\
                                  0.5 & 0.0 & 0.7 & 0.6 \\
                                  0.3 & 0.3 & 0.4 & 0.2 
                                \end{bmatrix}\quad\text{and}\quad\Psi_{\mathcal{F}_{A_1}}^{\max}=\begin{bmatrix}
                                  0.4 & 0.5 & 0.7 & 0.3 \\
                                  0.6 & 1.0 & 0.3 & 0.4 \\
                                  0.7 & 0.1 & 0.8 & 0.7 \\
                                  0.6 & 0.4 & 0.7 & 0.3 
                                \end{bmatrix},
\]
thus completing the third step.

The singular values of the matrix $\Psi_{\mathcal{F}_{A_1}}^{\min}$ are
\[
\sigma_1= 1.624965, \ \sigma_2=0.836748,\ \sigma_3=0.298884,\ \sigma_4= 0.002707.
\]
Hence, the pessimistic energy of the interval-valued fuzzy soft set $\mathcal{F}_{A_1}$ is the sum of the obtained singular values, i.e. $\mathbb{E}^{\min}_{\mathcal{F}_{A_1}}=2.763304.$

The singular values of the matrix $\Psi_{\mathcal{F}_{A_1}}^{\max}$ are
\[
\sigma_1= 2.151748, \ \sigma_2=0.768472,\ \sigma_3=0.306279,\ \sigma_4= 0.124988.
\]
Hence, the optimistic energy of the interval-valued fuzzy soft set $\mathcal{F}_{A_1}$ is the sum of the obtained singular values, i.e. $\mathbb{E}^{\max}_{\mathcal{F}_{A_1}}=3.351487.$

Now, the energy of the interval-valued fuzzy soft set $\mathcal{F}_{A_1}$ is
\[
\mathbb{E}^{*}_{\mathcal{F}_{A_1}}=3.0573955.
\]

Similarly, we determine the remaining energies and obtain:
\[
\mathbb{E}^{*}_{\mathcal{F}_{A_2}}=3.0316169,\ \mathbb{E}^{*}_{\mathcal{F}_{A_3}}=2.78006545,\ \mathbb{E}^{*}_{\mathcal{F}_{A_4}}=2.709927,\ \mathbb{E}^{*}_{\mathcal{F}_{A_5}}=3.157358.
\]

We need to draw a conclusion based on the obtained energies. Since the interval-valued fuzzy soft set $\mathcal{F}_{A_4}$ has the lowest energy, this means that the element $u_4$ contributes the most to the energy of the overall system, or it has the greatest influence on the systemic value, so apartment $u_4$ should be chosen. The obtained energy values can be linearly arranged as follows:
\[
\mathbb{E}^{*}_{\mathcal{F}_{A_4}}\leqslant\mathbb{E}^{*}_{\mathcal{F}_{A_3}}\leqslant\mathbb{E}^{*}_{\mathcal{F}_{A_2}}\leqslant\mathbb{E}^{*}_{\mathcal{F}_{A_1}}\leqslant\mathbb{E}^{*}_{\mathcal{F}_{A_5}}.
\]
Thus, by comparing all the energies, we obtain a linear order of apartments based on the priority of their selection:
\[
u_4\succ u_3\succ u_2\succ u_1\succ u_5.
\]

Let's compare the obtained solution with the solutions obtained by applying algorithms from the papers \cite{qin}, \cite{ma} and \cite{yangint}. The algorithm for solving decision-making problems based on interval-valued fuzzy soft sets in the paper \cite{yangint} is the score based decision making approach (SBDM), while in the paper \cite{ma} it is the decision making method considering the added objects (CAODM). Both algorithms give the same result:
\[
u_3\succ u_4\succ u_1\succ u_2\succ u_5,
\]
which means that apartment $u_3$ is chosen as the best option. However, Qin et al. explained in their paper \cite{qin} that apartment $u_3$ is not the best choice because it has the best geographical location, but the other three parameters are significantly worse compared to apartment $u_4.$ In fact, apartment $u_4$ only has a poor geographical location, while all other characteristics are at the highest level compared to the other apartments. Therefore, it seems that $u_4$ is a better choice than $u_3.$ In this example, where the elements $u_3$ i $u_4$ have extreme values of some parameters, the SBDM and CAODM algorithms ignore excellent candidates. In the paper \cite{qin} an algorithm based on means of the contrast table (MCTDM) is proposed to solve this problem, and the following order of apartments is obtained:
\[
u_4\succ u_3\succ u_5\succ u_1\succ u_2.
\]

Therefore, with our proposed method based on the energy, apartment $u_4$ is chosen, just like with the MCTDM method, which means that our method takes into account outliers or extreme values of some specific parameters.

Now we will consider another example from the paper \cite{qin} related to real-life application, also compared with SBDM (\cite{yangint}) and CAODM (\cite{ma}). 

\begin{example}
A person has seven days off for his annual holiday. He desires to spend and enjoy his holiday at one scenic spot. He goes to visit the web site of www.weather.com.cn (accessed on 1 September 2021), which displays the weather forecast for sixteen destination scenic spots. The data of weather forecast are described from four aspects such as ''temperature'', ''relative humidity'', ''air quality index'', ''wind speed''. Qin et al. find the maximum and minimum values for every parameter as the upper limits and lower limits of the interval and then apply the model of IVFSS to describe this Scenic Spots Weather Condition Evaluation Systems. There are sixteen scenic spots in China as the candidates, ie. 
\begin{align*}
  U &= \{u_1,u_2,\ldots,u_{16}\} \\ 
    &= \{\text{Forbidden City, The Bund, Bangchui Island, West Lake, Five Avenue, Ciqikou, Confucius Temple,} \\
    &\ \quad \text{Yellow Crane Tower, Mount Tai, Jiuzhai Valley, Zhangjiajie, Gulangyu Islet,}  \\
    &\ \quad \text{The ancient City of Ping Yao, Terra Cotta Warriors, Mogao Grottoes, Erhai Lake}\}
\end{align*} 
and the set of parameters is
\begin{align*}
  A &= \{x_1,x_2,x_3,x_4\}=\{\text{''temperature'', ''relative humidity'', ''air quality index'', ''wind speed''}\}.
\end{align*}

Qin et al. normalized the original data into IVFSS by transforming minimum value and maximum value into subintervals of $[0,1],$ which is normalized as lower and upper degree of membership. Then the table of obtained IVFSS is given below

$\begin{array}{r|cccc}
   \mathcal{F}_A & x_1 & x_2 & x_3 & x_4 \\ \hline
      u_1 & [0.13,0.52] & [0.60,0.98] & [0.27,0.95] & [0.50,1.00] \\
      u_2 & [0.35,0.71] & [0.06,0.49] & [0.62,0.89] & [0.25,1.00] \\
      u_3 & [0.23,0.52] & [0.36,0.64] & [0.25,0.81] & [0.50,0.75] \\
      u_4 & [0.39,0.74] & [0.06,0.37] & [0.46,0.74] & [0.25,0.75] \\
      u_5 & [0.19,0.55] & [0.50,1.00] & [0.04,0.76] & [0.50,0.75] \\
      u_6 & [0.45,0.81] & [0.06,0.20] & [0.73,0.85] & [0.50,1.00] \\
      u_7 & [0.32,0.71] & [0.00,0.51] & [0.57,0.73] & [0.50,0.75] \\
      u_8 & [0.39,0.77] & [0.01,0.10] & [0.57,0.96] & [0.50,0.75] \\
      u_9 & [0.03,0.42] & [0.33,0.89] & [0.06,0.76] & [0.00,0.75] \\
   u_{10} & [0.06,0.42] & [0.20,0.48] & [0.37,0.67] & [0.75,1.00] \\
   u_{11} & [0.42,0.81] & [0.01,0.16] & [0.91,1.00] & [0.50,0.75] \\
   u_{12} & [0.65,1.00] & [0.32,0.44] & [0.66,0.91] & [0.00,0.50] \\
   u_{13} & [0.13,0.58] & [0.19,0.90] & [0.04,0.55] & [0.25,1.00] \\
   u_{14} & [0.19,0.68] & [0.15,0.34] & [0.00,0.36] & [0.50,0.75] \\
   u_{15} & [0.00,0.48] & [0.32,0.91] & [0.29,0.78] & [0.50,0.75] \\
   u_{16} & [0.42,0.87] & [0.20,0.78] & [0.46,0.92] & [0.50,1.00]
 \end{array}
$
\end{example}

We can apply the described decision-making algorithm using the energy of the interval-valued fuzzy soft set. Similar to the previous example, we calculate the singular values of the matrix of minimum values and the matrix of maximum values, as well as the pessimistic and optimistic energies of the obtained interval-valued fuzzy soft sets, resulting in the following energies of IVFSS:
\begin{align*}
  \mathbb{E}^{*}_{\mathcal{F}_{A_1}} &= 6.2798295, \mathbb{E}^{*}_{\mathcal{F}_{A_2}} = 6.383722, 
  \mathbb{E}^{*}_{\mathcal{F}_{A_3}} = 6.4203115, 
  \mathbb{E}^{*}_{\mathcal{F}_{A_4}} = 6.4340589, \\
  \mathbb{E}^{*}_{\mathcal{F}_{A_5}} &= 6.3269075, 
  \mathbb{E}^{*}_{\mathcal{F}_{A_6}} = 6.3179751, 
  \mathbb{E}^{*}_{\mathcal{F}_{A_7}} = 6.4112738, 
  \mathbb{E}^{*}_{\mathcal{F}_{A_8}} = 6.345155, \\
  \mathbb{E}^{*}_{\mathcal{F}_{A_9}} &= 6.4089085, 
  \mathbb{E}^{*}_{\mathcal{F}_{A_{10}}} = 6.336624, 
  \mathbb{E}^{*}_{\mathcal{F}_{A_{11}}} = 6.27166, 
  \mathbb{E}^{*}_{\mathcal{F}_{A_{12}}} = 6.1741684, \\
  \mathbb{E}^{*}_{\mathcal{F}_{A_{13}}} &= 6.3893555, 
  \mathbb{E}^{*}_{\mathcal{F}_{A_{14}}} = 6.383308, 
  \mathbb{E}^{*}_{\mathcal{F}_{A_{15}}} = 6.372367, 
  \mathbb{E}^{*}_{\mathcal{F}_{A_{16}}} = 6.35310915.
\end{align*}

The obtained values of energies can be linearly arranged as follows:
\begin{align*}
  \mathbb{E}^{*}_{\mathcal{F}_{A_{12}}} &\leqslant\mathbb{E}^{*}_{\mathcal{F}_{A_{11}}}\leqslant\mathbb{E}^{*}_{\mathcal{F}_{A_1}}\leqslant\mathbb{E}^{*}_{\mathcal{F}_{A_6}}\leqslant\mathbb{E}^{*}_{\mathcal{F}_{A_5}}
  \leqslant\mathbb{E}^{*}_{\mathcal{F}_{A_{10}}}\leqslant\mathbb{E}^{*}_{\mathcal{F}_{A_8}}\leqslant\mathbb{E}^{*}_{\mathcal{F}_{A_{16}}}\leqslant\mathbb{E}^{*}_{\mathcal{F}_{A_{15}}} \\
   &\leqslant\mathbb{E}^{*}_{\mathcal{F}_{A_{14}}}\leqslant\mathbb{E}^{*}_{\mathcal{F}_{A_2}}\leqslant\mathbb{E}^{*}_{\mathcal{F}_{A_{13}}}\leqslant\mathbb{E}^{*}_{\mathcal{F}_{A_9}}
   \leqslant\mathbb{E}^{*}_{\mathcal{F}_{A_7}}\leqslant\mathbb{E}^{*}_{\mathcal{F}_{A_3}}\leqslant\mathbb{E}^{*}_{\mathcal{F}_{A_4}}.
\end{align*}

Thus, we can conclude that we obtain the following order of the scenic spots in China:
\[
u_{12}\succ u_{11}\succ u_{1}\succ u_{6}\succ u_{5}\succ u_{10}\succ u_{8}\succ u_{16}\succ u_{15}\succ u_{14}\succ u_{2}\succ u_{13}\succ u_{9}\succ u_{7}\succ u_{3}\succ u_{4}.
\]

If we compare the obtained results with the results obtained by applying the aforementioned algorithms, we can conclude the following. By applying the SBDM and CAODM algorithms, the following ranking is obtained:
\[
u_{16}\succ u_{1}\succ u_{6}\succ u_{11}\succ u_{12}\succ u_{2}\succ u_{5}\succ u_{7}\succ u_{3}\succ u_{8}\succ u_{15}\succ u_{10}\succ u_{4}\succ u_{13}\succ u_{9}\succ u_{14},
\]
while applying the MCTDM algorithm yields:
\[
u_{16}\succ u_{6}\succ u_{1}\succ u_{12}\succ u_{11}\succ u_{2}\succ u_{8}\succ u_{7}\succ u_{3}= u_{5}\succ u_{15}= u_{10}\succ u_{13}\succ u_{4}\succ u_{14}\succ u_{9}.
\]
We can notice that all methods provide similar solutions, with our algorithm giving preference to places like Gulangyu Islet and Zhangjiajie over Erhai Lake. Although the MCTDM method is an improvement over the SBDM and CAODM methods, it also has certain drawbacks. Namely, the question of the uniqueness of the decision-making order arises, where in this example, we can observe that the MCTDM method does not provide a unique solution. Our method based on energy improves in terms of providing a unique solution, but it has a slightly different linear order compared to the MCTDM method from the paper \cite{qin}.

Finally, the following table lists the basic characteristics of the observed methods obtained based on the examples provided in this paper.

\begin{tabular}{c|c|c|c}
  Procedure & Ranking      & Considering the  & Unique \\ 
                   & methodology   & extreme data     & solution             \\ \hline
  
  \cite{yangint} & Scores based & No & Yes \\
  & decision-making approach &  & \\ \hline 
  
  \cite{ma} & Decision-making method & No & Yes \\
   & considering the added objects &  &  \\ \hline
   
  \cite{qin} & Means of the & Yes & No \\
   & contrast table &  &  \\ \hline
  Algorithm based on & Scores based on  & Yes & Yes \\
  the energy of IVFSS & comparison of energies &  & 
\end{tabular}\\

After analyzing the obtained results, we can conclude that our method based on the pessimistic energy, the optimistic energy and the energy of interval-valued fuzzy soft sets does not fully correspond to the proposed solutions in the observed papers. The level of confidence in the decision correlates with the minimum or maximum energy of the interval-valued fuzzy soft set, and the main challenge of methods relying on the energy of interval-valued fuzzy soft sets is the difficulty in determining the exact lower and upper bounds of this energy.

\section{Summary and Conclusion}
\label{}
Interval-valued fuzzy soft sets are obtained by combining interval-valued fuzzy sets and soft sets. They find wide applications in numerous areas, including decision-making. In this paper, we manage to integrate characteristics from graph theory into the theory of IVFSS, paving the way for new research possibilities. The main focus is on defining numerical characteristics of interval-valued fuzzy soft sets - the pessimistic energy, the optimistic energy and the energy of IVFSS. The energy is obtained as the arithmetic mean of the pessimistic and optimistic energies of the interval-valued fuzzy soft set. It contributes to the improvement of decision-making algorithms, and in this paper, a parallel is drawn between the decision-making algorithm based on the energy of IVFSS and algorithms defined in the papers \cite{ma}, \cite{qin} and \cite{yangint}. There are numerous questions that can be considered in future research, such as the boundaries of the introduced energies. Additionally, one of the future research directions is the introduction of IVFSS energy when the weights of given parameters are known, and the construction of a decision-making algorithm based on this introduced energy that can be compared with the algorithm in the paper \cite{peng}.


\end{document}